\newenvironment{lyxgreyedout}{\textcolor[gray]{0.8}\bgroup}{\egroup}
\theoremstyle{plain}
\theoremstyle{plain}
\newtheorem{thm}{Theorem}
\begin{document}

\title{Maximin affinity learning of image segmentation}

\author{Srinivas C. Turaga \thanks{\texttt{sturaga@mit.edu}}\\
MIT\\
\And Kevin L. Briggman \\
Max-Planck Insitute for Medical Research \\
\And Moritz Helmstaedter \\
Max-Planck Insitute for Medical Research \\
\And Winfried Denk \\
Max-Planck Insitute for Medical Research \\
\And H. Sebastian Seung \\
MIT, HHMI}
\maketitle
\begin{abstract}
Images can be segmented by first using a classifier to predict an
affinity graph that reflects the degree to which image pixels must
be grouped together and then partitioning the graph to yield a segmentation.
Machine learning has been applied to the affinity classifier to produce
affinity graphs that are good in the sense of minimizing edge misclassification
rates. However, this error measure is only indirectly related to the
quality of segmentations produced by ultimately partitioning the affinity
graph. We present the first machine learning algorithm for training
a classifier to produce affinity graphs that are good in the sense
of producing segmentations that directly minimize the Rand index,
a well known segmentation performance measure.

The Rand index measures segmentation performance by quantifying the
classification of the connectivity of image pixel pairs after segmentation.
By using the simple graph partitioning algorithm of finding the connected
components of the thresholded affinity graph, we are able to train
an affinity classifier to directly minimize the Rand index of segmentations
resulting from the graph partitioning. Our learning algorithm corresponds
to the learning of maximin affinities between image pixel pairs, which
are predictive of the pixel-pair connectivity.
\end{abstract}

\section{Introduction}

Supervised learning has emerged as a serious contender in the field
of image segmentation, ever since the creation of training sets of
images with {}``ground truth'' segmentations provided by humans,
such as the Berkeley Segmentation Dataset \cite{Martin:2001}. Supervised
learning requires 1) a parametrized algorithm that map images to segmentations,
2) an objective function that quantifies the performance of a segmentation
algorithm relative to ground truth, and 3) a means of searching the
parameter space of the segmentation algorithm for an optimum of the
objective function. 

In the supervised learning method presented here, the segmentation
algorithm consists of a parametrized \emph{classifier} that predicts
the weights of a nearest neighbor affinity graph over image pixels,
followed by a graph \emph{partitioner} that thresholds the affinity
graph and finds its connected components. Our objective function is
the Rand index \cite{Rand:1971}, which has recently been proposed
as a quantitative measure of segmentation performance \cite{Unnikrishnan:2007}.
We {}``soften'' the thresholding of the classifier output and adjust
the parameters of the classifier by gradient learning based on the
Rand index. 

Because maximin edges of the affinity graph play a key role in our
learning method, we call it \emph{maximin affinity learning of image
segmentation}, or MALIS. The minimax path and edge are standard concepts
in graph theory, and maximin is the opposite-sign sibling of minimax.
Hence our work can be viewed as a machine learning application of
these graph theoretic concepts. MALIS focuses on improving classifier
output at maximin edges, because classifying these edges incorrectly
leads to genuine segmentation errors, the splitting or merging of
segments.

To the best of our knowledge, MALIS is the first supervised learning
method that is based on optimizing a genuine measure of segmentation
performance. The idea of training a classifier to predict the weights
of an affinity graph is not novel. Affinity classifiers were previously
trained to minimize the number of misclassified affinity edges \cite{Fowlkes:2003,Martin:2004}.
This is not the same as optimizing segmentations produced by partitioning
the affinity graph. There have been attempts to train affinity classifiers
to produce good segmentations when partitioned by normalized cuts
\cite{Meila:2001,Bach:2006}. But these approaches do not optimize
a genuine measure of segmentation performance such as the Rand index.
The work of Bach and Jordan \cite{Bach:2006} is the closest to our
work. However, they only minimize an upper bound to a renormalized
version of the Rand index. Both approaches require many approximations
to make the learning tractable.

In other related work, classifiers have been trained to optimize performance
at detecting image pixels that belong to object boundaries \cite{Martin:2004,Dollar:2006,Maire:2008}.
Our classifier can also be viewed as a boundary detector, since a
nearest neighbor affinity graph is essentially the same as a boundary
map, up to a sign inversion. However, we combine our classifier with
a graph partitioner to produce segmentations. The classifier parameters
are not trained to optimize performance at boundary detection, but
to optimize performance at segmentation as measured by the Rand index. 

There are also methods for supervised learning of image labeling using
Markov or conditional random fields \cite{He:2004}. But image labeling
is more similar to multi-class pixel classification rather than image
segmentation, as the latter task may require distinguishing between
multiple objects in a single image that all have the same label.

In the cases where probabilistic random field models have been used
for image parsing and segmentation, the models have either been simplistic
for tractability reasons \cite{Kumar:2003} or have been trained piecemeal.
For instance, Tu et al. \cite{Tu:2005} separately train low-level
discriminative modules based on a boosting classifier, and train high-level
modules of their algorithm to model the joint distribution of the
image and the labeling. These models have never been trained to minimize
the Rand index.

\section{Partitioning a thresholded affinity graph by connected components}

\begin{figure}
\begin{centering}
\includegraphics[width=1\columnwidth]{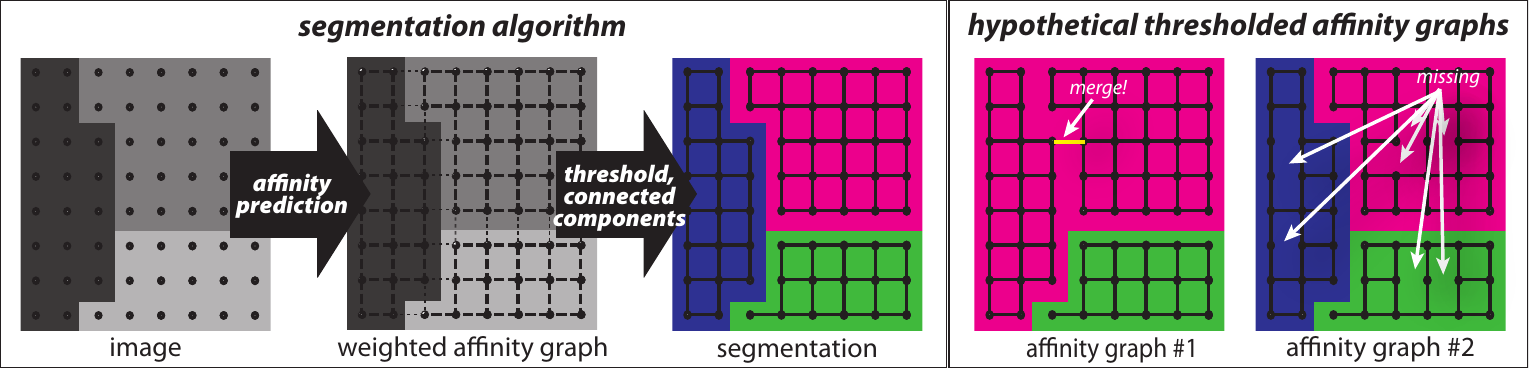}
\par\end{centering}

\caption{(left) \textbf{Our segmentation algorithm.} We first generate a nearest
neighbor weighted affinity graph representing the degree to which
nearest neighbor pixels should be grouped together. The segmentation
is generated by finding the connected components of the thresholded
affinity graph. (right) \textbf{Affinity misclassification rates are
a poor measure of segmentation performance. }Affinity graph \#1 makes
only 1 error (dashed edge) but results in poor segmentations, while
graph \#2 generates a perfect segmentation despite making many affinity
misclassifications (dashed edges).\label{fig:Segmentation-algorithm-two-graphs}}

\end{figure}
Our class of segmentation algorithms is constructed by combining a
classifier and a graph partitioner (see Figure \ref{fig:Segmentation-algorithm-two-graphs}).
The classifier is used to generate the weights of an affinity graph.
The nodes of the graph are image pixels, and the edges are between
nearest neighbor pairs of pixels. The weights of the edges are called
affinities. A high affinity means that the two pixels tend to belong
to the same segment. The classifier computes the affinity of each
edge based on an image patch surrounding the edge. 

The graph partitioner first thresholds the affinity graph by removing
all edges with weights less than some threshold value $\theta$. The
connected components of this thresholded affinity graph are the segments
of the image.

For this class of segmentation algorithms, it's obvious that a single
misclassified edge of the affinity graph can dramatically alter the
resulting segmentation by splitting or merging two segments (see Fig.
\ref{fig:Segmentation-algorithm-two-graphs}). This is why it is important
to learn by optimizing a measure of segmentation performance rather
than affinity prediction.

We are well aware that connected components is an exceedingly simple
method of graph partitioning. More sophisticated algorithms, such
as spectral clustering \cite{Shi:2000} or graph cuts \cite{Boykov:2001},
might be more robust to misclassifications of one or a few edges of
the affinity graph. Why not use them instead? We have two replies
to this question.

First, because of the simplicity of our graph partitioning, we can
derive a simple and direct method of supervised learning that optimizes
a true measure of image segmentation performance. So far learning
based on more sophisticated graph partitioning methods has fallen
short of this goal \cite{Meila:2001,Bach:2006}.

Second, even if it were possible to properly learn the affinities
used by more sophisticated graph partitioning methods, we would still
prefer our simple connected components. The classifier in our segmentation
algorithm can also carry out sophisticated computations, if its representational
power is sufficiently great. Putting the sophistication in the classifier
has the advantage of making it learnable, rather than hand-designed.

The sophisticated partitioning methods clean up the affinity graph
by using prior assumptions about the properties of image segmentations.
But these prior assumptions \emph{could be incorrect}. The spirit
of the machine learning approach is to use a large amount of training
data and minimize the use of prior assumptions. If the sophisticated
partitioning methods are indeed the best way of achieving good segmentation
performance, we suspect that our classifier will learn them from the
training data. If they are not the best way, we hope that our classifier
will do even better.

\section{The Rand index quantifies segmentation performance}

Image segmentation can be viewed as a special case of the general
problem of clustering, as image segments are clusters of image pixels.
Long ago, Rand proposed an index of similarity between two clusterings
\cite{Rand:1971}. Recently it has been proposed that the Rand index
be applied to image segmentations \cite{Unnikrishnan:2007}. Define
a segmentation $S$ as an assignment of a segment label $s_{i}$ to
each pixel $i$. The indicator function $\delta(s_{i},s_{j})$ is
$1$ if pixels $i$ and $j$ belong to the same segment ($s_{i}=s_{j}$)
and $0$ otherwise. Given two segmentations $S$ and $\hat{S}$ of
an image with $N$ pixels, define the function\begin{equation}
1-RI(\hat{S},S)={N \choose 2}^{-1}\sum_{i<j}\left|\delta(s_{i},s_{j})-\delta(\hat{s}_{i},\hat{s}_{j})\right|\label{eq:clustering-cost-function}\end{equation}
which is the fraction of image pixel pairs on which the two segmentations
\emph{disagree}. We will refer to the function $1-RI(\hat{S},S)$
as the Rand index, although strictly speaking the Rand index is $RI(\hat{S},S)$,
the fraction of image pixel pairs on which the two segmentations \emph{agree}.
In other words, the Rand index is a measure of similarity, but we
will often apply that term to a measure of dissimilarity. 

In this paper, the Rand index is applied to compare the output $\hat{S}$
of a segmentation algorithm with a ground truth segmentation $S$,
and will serve as an objective function for learning. Figure \ref{fig:Segmentation-algorithm-two-graphs}
illustrates why the Rand index is a sensible measure of segmentation
performance. The segmentation of affinity graph \#1 incurs a huge
Rand index penalty relative to the ground truth. A single wrongly
classified edge of the affinity graph leads to an incorrect merger
of two segments, causing many pairs of image pixels to be wrongly
assigned to the same segment. On the other hand, the segmentation
corresponding to affinity graph \#2 has a perfect Rand index, even
though there are misclassifications in the affinity graph. In short,
the Rand index makes sense because it strongly penalizes errors in
the affinity graph that lead to split and merger errors.

\begin{figure}
\begin{centering}
\includegraphics{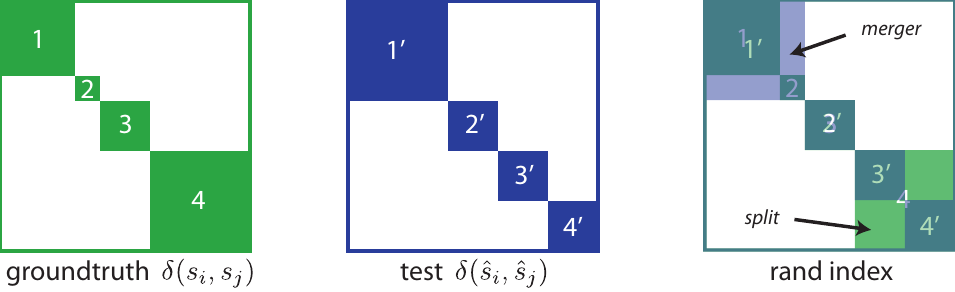}
\par\end{centering}

\caption{\textbf{The Rand index quantifies segmentation performance by comparing
the difference in pixel pair connectivity between the groundtruth
and test segmentations.} Pixel pair connectivities can be visualized
as symmetric binary block-diagonal matrices $\delta(s_{i},s_{j})$.
Each diagonal block corresponds to connected pixel pairs belonging
to one of the image segments. The Rand index incurs penalties when
pixels pairs that must not be connected are connected or vice versa.
This corresponds to locations where the two matrices disagree. An
erroneous merger of two groundtruth segments incurs a penalty proportional
to the product of the sizes of the two segments. Split errors are
similarly penalized.}

\end{figure}

\section{Connectivity and maximin affinity}

Recall that our segmentation algorithm works by finding connected
components of the thresholded affinity graph. Let $\hat{S}$ be the
segmentation produced in this way. To apply the Rand index to train
our classifier, we need a simple way of relating the indicator function
$\delta(\hat{s}_{i},\hat{s}_{j})$ in the Rand index to classifier
output. In other words, we would like a way of characterizing whether
two pixels are connected in the thresholded affinity graph.

To do this, we introduce the concept of maximin affinity, which is
defined for any pair of pixels in an affinity graph (the definition
is generally applicable to any weighted graph). Let $A_{kl}$be the
affinity of pixels $k$ and $l$. Let $\mathcal{P}{}_{ij}$ be the
set of all paths in the graph that connect pixels $i$ and $j$. For
every path $P$ in $\mathcal{P}_{ij}$, there is an edge (or edges)
with minimal affinity. This is written as $\min_{\left\langle k,l\right\rangle \in P}A_{kl}$,
where $\left\langle k,l\right\rangle \in P$ means that the edge between
pixels $k$ and $l$ are in the path $P$. 

A maximin path $P_{ij}^{\ast}$ is a path between pixels $i$ and
$j$ that maximizes the minimal affinity,\begin{eqnarray}
P_{ij}^{\ast} & = & \arg\max_{P\in\mathcal{P}_{ij}}\min_{\left\langle k,l\right\rangle \in P}A_{kl}\label{eq:maximin-path}\end{eqnarray}
The maximin affinity of pixels $i$ and $j$ is the affinity of the
maximin edge, or the minimal affinity of the maximin path, \begin{eqnarray}
A_{ij}^{\ast} & = & \max_{P\in\mathcal{P}{}_{ij}}\min_{\left\langle k,l\right\rangle \in P}A_{kl}\label{eq:maximin-distance}\end{eqnarray}
We are now ready for a trivial but important theorem.
\begin{thm}
A pair of pixels is connected in the thresholded affinity graph if
and only if their maximin affinity exceeds the threshold value. \end{thm}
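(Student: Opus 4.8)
The plan is to prove the two implications of the biconditional directly from the definitions, since the statement is essentially a reformulation of the maximin definition in terms of thresholded connectivity. First I would fix notation for the thresholded graph: write $G_\theta$ for the graph obtained from the affinity graph by deleting every edge $\langle k,l\rangle$ with $A_{kl} < \theta$, so that an edge survives precisely when $A_{kl} \geq \theta$. The single observation driving both directions is that an entire path $P$ of the original graph survives in $G_\theta$ if and only if all of its edges survive, i.e. $\min_{\langle k,l\rangle \in P} A_{kl} \geq \theta$.

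For the direction assuming a large maximin affinity, I would take a maximin path $P_{ij}^{\ast}$ realizing $A_{ij}^{\ast}$ in (\ref{eq:maximin-distance}). By definition every edge on $P_{ij}^{\ast}$ has affinity at least $A_{ij}^{\ast} \geq \theta$, so $P_{ij}^{\ast}$ survives thresholding and exhibits a path from $i$ to $j$ in $G_\theta$; hence $i$ and $j$ are connected. For the converse, I would start from any path $P$ joining $i$ and $j$ in $G_\theta$ (one exists because they are connected). Every edge of $P$ survived, so $\min_{\langle k,l\rangle \in P} A_{kl} \geq \theta$, and since $A_{ij}^{\ast}$ is the maximum of this minimal-edge quantity over all paths in $\mathcal{P}_{ij}$ by (\ref{eq:maximin-distance}), we obtain $A_{ij}^{\ast} \geq \min_{\langle k,l\rangle \in P} A_{kl} \geq \theta$. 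This closes the biconditional.

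There is no substantive obstacle here; the result is, as the authors note, trivial, and amounts to swapping the order of an existential-over-paths quantifier with the threshold comparison. The one point genuinely requiring care is the inequality convention. Because the partitioner deletes edges with $A_{kl} < \theta$, the clean statement is that connectivity holds if and only if $A_{ij}^{\ast} \geq \theta$; matching the word \emph{exceeds} literally would instead require a strictly-greater cut (deleting all edges with $A_{kl} \leq \theta$). I would therefore state the thresholding convention explicitly at the outset so that the boundary case $A_{ij}^{\ast} = \theta$ is handled unambiguously and both directions use the same inequality.

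Finally, I would note one small well-posedness remark that legitimizes saying ``the'' maximin path in the first direction: on a finite graph there are only finitely many simple paths in $\mathcal{P}_{ij}$, so the $\arg\max$ in (\ref{eq:maximin-path}) is attained and $A_{ij}^{\ast}$ in (\ref{eq:maximin-distance}) is a genuine maximum rather than a supremum. With the convention fixed and attainment noted, the two short implications above constitute the entire argument.
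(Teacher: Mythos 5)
Your proposal is correct and follows essentially the same argument as the paper's proof: connectivity in the thresholded graph is equivalent to the existence of a path whose minimal edge affinity clears the threshold, which in turn is equivalent to the maximin affinity clearing the threshold. Your added remarks on the boundary convention ($A_{ij}^{\ast} \geq \theta$ versus $A_{ij}^{\ast} > \theta$) and on attainment of the maximum over finitely many simple paths are sound refinements of the same argument rather than a different route.
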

\begin{proof}
By definition, a pixel pair is connected in the thresholded affinity
graph if and only if there exists a path between them. Such a path
is equivalent to a path in the unthresholded affinity graph for which
the minimal affinity is above the threshold value. This path in turn
exists if and only if the maximin affinity is above the threshold
value. 
\end{proof}
As a consequence of this theorem, pixel pairs can be classified as
connected or disconnected by thresholding maximin affinities. Let
$\hat{S}$ be the segmentation produced by thresholding the affinity
graph $A_{ij}$ and then finding connected components. Then the connectivity
indicator function is \begin{equation}
\delta(\hat{s}_{i},\hat{s}_{j})=H(A_{ij}^{*}-\theta)\label{eq:connectivity-condition}\end{equation}
where $H$ is the Heaviside step function.

Maximin affinities can be computed efficiently using minimum spanning
tree algorithms \cite{Fischer:2004}. A maximum spanning tree is equivalent
to a minimum spanning tree, up to a sign change of the weights. Any
path in a maximum spanning tree is a maximin path. For our nearest
neighbor affinity graphs, the maximin affinity of a pixel pair can
be computed in $O(|E|\cdot\alpha(|V|))$ where $|E|$ is the number
of graph edges and $|V|$ is the number of pixels and $\alpha(\cdot)$
is the inverse Ackerman function which grows sub-logarithmically.
The full matrix $A_{ij}^{*}$ can be computed in time $O(|V|^{2})$
since the computation can be shared. Note that maximin affinities
are required for training, but not testing. For segmenting the image
at test time, only a connected components computation need be performed,
which takes time linear in the number of edges $|E|$.

\section{Optimizing the Rand index by learning maximin affinities\label{sec:Classifier}}

Since the affinities and maximin affinities are both functions of
the image $I$ and the classifier parameters $W$, we will write them
as $A_{ij}(I;W)$ and $A_{ij}^{\ast}(I;W)$, respectively. By Eq.
(\ref{eq:connectivity-condition}) of the previous section, the Rand
index of Eq. ($\ref{eq:clustering-cost-function}$) takes the form\[
1-RI(S,I;W)={N \choose 2}^{-1}\sum_{i<j}\left|\delta(s_{i},s_{j})-H(A_{ij}^{\ast}(I;W)-\theta)\right|\]
Since this is a discontinuous function of the maximin affinities,
we make the usual relaxation by replacing $|\delta(s_{i},s_{j})-H(A_{ij}^{\ast}(I;W)-\theta)|$
with a continuous loss function $l(\delta(s_{i},s_{j}),A_{ij}^{\ast}(I;W))$.
Any standard loss such as the such as the square loss, $\frac{1}{2}(x-\hat{x})^{2}$,
or the hinge loss can be used for $l(x,\hat{x})$. Thus we obtain
a cost function suitable for gradient learning,

\begin{eqnarray}
E(S,I;W) & = & {N \choose 2}^{-1}\sum_{i<j}l(\delta(s_{i},s_{j}),A_{ij}^{\ast}(I;W))\nonumber \\
 & = & {N \choose 2}^{-1}\sum_{i<j}l(\delta(s_{i},s_{j}),\max_{P\in\mathcal{P}_{ij}}\min_{\left\langle k,l\right\rangle \in P}A_{kl}(I;W))\label{eq:clustering-cost-maximin}\end{eqnarray}

The max and min operations are continuous and differentiable (though
not continuously differentiable). If the loss function $l$ is smooth,
and the affinity $A_{kl}(I;W)$ is a smooth function, then the gradient
of the cost function is well-defined, and gradient descent can be
used as an optimization method.

Define $(k,l)=mm(i,j)$ to be the maximin edge for the pixel pair
$(i,j)$. If there is a tie, choose between the maximin edges at random.
Then the cost function takes the form \begin{eqnarray*}
E(S,I;W) & = & {N \choose 2}^{-1}\sum_{i<j}l(\delta(s_{i},s_{j}),A_{mm(i,j)}(I;W))\end{eqnarray*}
It's instructive to compare this with the cost function for standard
affinity learning\[
E_{standard}(S,I;W)=\frac{2}{cN}\sum_{\left\langle i,j\right\rangle }l(\delta(s_{i},s_{j}),A_{ij}(I;W))\]
where the sum is over all nearest neighbor pixel pairs $\left\langle i,j\right\rangle $
and $c$ is the number of nearest neighbors \cite{Fowlkes:2003}.
In contrast, the sum in the MALIS cost function is over all pairs
of pixels, whether or not they are adjacent in the affinity graph.
Note that a single edge can be the maximin edge for multiple pairs
of pixels, so its affinity can appear multiple times in the MALIS
cost function. Roughly speaking, the MALIS cost function is similar
to the standard cost function, except that each edge in the affinity
graph is weighted by the number of pixel pairs that it causes to be
incorrectly classified.

\section{Online stochastic gradient descent}

Computing the cost function or its gradient requires finding the maximin
edges for all pixel pairs. Such a batch computation could be used
for gradient learning. However, online stochastic gradient learning
is often more efficient than batch learning \cite{LeCun:1998}. Online
learning makes a gradient update of the parameters after each pair
of pixels, and is implemented as described in the box. 

\framebox{\begin{minipage}[t]{0.5\columnwidth}%

\subsection*{Maximin affinity learning}

1. Pick a random pair of (not necessarily nearest neighbor) pixels
$i$ and $j$ from a randomly drawn training image $I$.\\

2. Find a maximin edge $mm(i,j)$

3. Make the gradient update:\\
$W\leftarrow W+\eta\frac{d}{dW}l(\delta(s_{i},s_{j}),A_{mm(i,j)}(I;W))$%
\end{minipage}}%
\framebox{\begin{minipage}[t]{0.5\columnwidth}%

\subsection*{Standard affinity learning}

\noindent 1. Pick a random pair of nearest neighbor pixels $i$ and
$j$ from a randomly drawn training image $I$\\
\\

2. Make the gradient update:\\
$W\leftarrow W+\eta\frac{d}{dW}l(\delta(s_{i},s_{j}),A_{ij}(I;W))$%
\end{minipage}}

For comparison, we also show the standard affinity learning \cite{Fowlkes:2003}.
For each iteration, both learning methods pick a random pair of pixels
from a random image. Both compute the gradient of the weight of a
single edge in the affinity graph. However, the standard method picks
a nearest neighbor pixel pair and trains the affinity of the edge
between them. The maximin method picks a pixel pair of arbitrary separation
and trains the minimal affinity on a maximin path between them. 

Effectively, our connected components performs spatial integration
over the nearest neighbor affinity graph to make connectivity decisions
about pixel pairs at large distances. MALIS trains these global decisions,
while standard affinity learning trains only local decisions. MALIS
is superior because it truly learns segmentation, but this superiority
comes at a price. The maximin computation requires that on each iteration
the affinity graph be computed for the whole image. Therefore it is
slower than the standard learning method, which requires only a local
affinity prediction for the edge being trained. Thus there is a computational
price to be paid for the optimization of a true segmentation error.

\section{Application to electron microscopic images of neurons}

\subsection{Electron microscopic images of neural tissue\label{sub:EM}}

By 3d imaging of brain tissue at sufficiently high resolution, as
well as identifying synapses and tracing all axons and dendrites in
these images, it is possible in principle to reconstruct connectomes,
complete {}``wiring diagrams'' for a brain or piece of brain \cite{Seung:2009,Briggman:2006,Smith:2007}.
Axons can be narrower than 100 nm in diameter, necessitating the use
of electron microscopy (EM) \cite{Seung:2009}. At such high spatial
resolution, just one cubic millimeter of brain tissue yields teravoxel
scale image sizes. Recent advances in automation are making it possible
to collect such images \cite{Seung:2009,Briggman:2006,Smith:2007},
but image analysis remains a challenge. Tracing axons and dendrites
is a very large-scale image segmentation problem requiring high accuracy.
The images used for this study were from the inner plexiform layer
of the rabbit retina, and were taken using Serial Block-Face Scanning
Electron Microscopy \cite{Denk:2004}. Two large image volumes of
$100^{3}$ voxels were hand segmented and reserved for training and
testing purposes.

\subsection{Training convolutional networks for affinity classification}

Any classifier that is a smooth function of its parameters can be
used for maximin affinity learning. We have used convolutional networks
(CN), but our method is not restricted to this choice. Convolutional
networks have previously been shown to be effective for similar EM
images of brain tissue \cite{Jain:2007}.

We trained two identical four-layer CNs, one with standard affinity
learning and the second with MALIS. The CNs contained 5 feature maps
in each layer with sigmoid nonlinearities. All filters in the CN were
$5\times5\times5$ in size. This led to an affinity classifier that
uses a $17\times17\times17$ cubic image patch to classify a affinity
edge. We used the square-square loss function $l(x,\hat{x})=x\cdot\max(0,1-\hat{x}-m)^{2}+(1-x)\cdot\max(0,\hat{x}-m)^{2}$,
with a margin $m=0.3$.

As noted earlier, maximin affinity learning can be significantly slower
than standard affinity learning, due to the need for computing the
entire affinity graph on each iteration, while standard affinity training
need only predict the weight of a single edge in the graph. For this
reason, we constructed a proxy training image dataset by picking all
possible $21\times21\times21$ sized overlapping sub-images from the
original training set. Since each $21\times21\times21$ sub-image
is smaller than the original image, the size of the affinity graph
needed to be predicted for the sub-image is significantly smaller,
leading to faster training. A consequence of this approximation is
that the maximum separation between image pixel pairs chosen for training
is less than about 20 pixels. A second means of speeding up the maximin
procedure is by pretraining the maximin CN for 500,000 iterations
using the fast standard affinity classification cost function. At
the end, both CNs were trained for a total of 1,000,000 iterations
by which point the training error plateaued.

\subsection{Maximin learning leads to dramatic improvement in segmentation performance}

\begin{figure}[H]
\begin{centering}
\includegraphics[width=1\columnwidth]{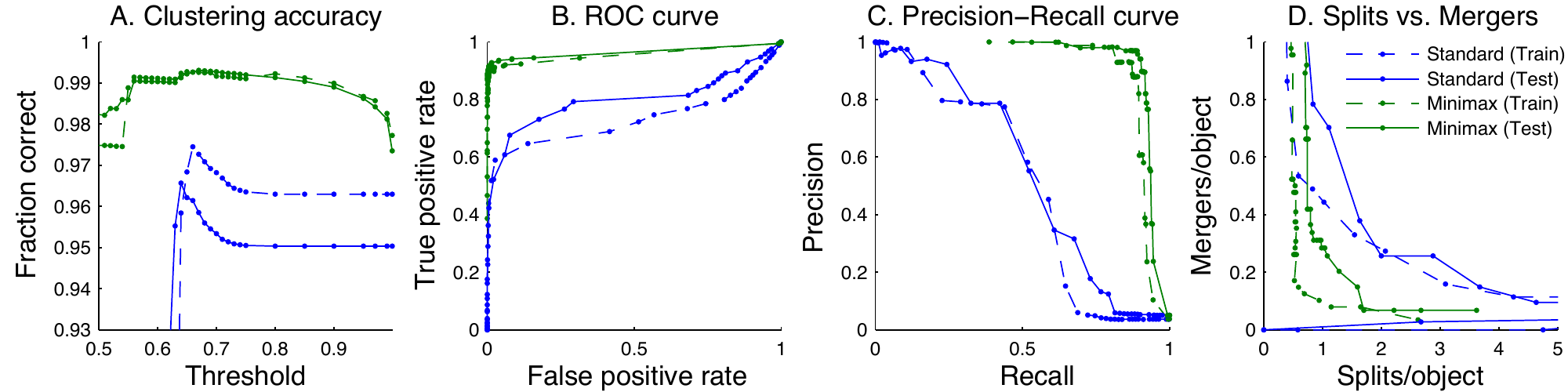}
\par\end{centering}

\centering{}\caption{\textbf{Quantification of segmentation performance on 3d electron
microscopic images of neural tissue.} A) Clustering accuracy measuring
the number of correctly classified pixel pairs. B) and C) ROC curve
and precision-recall quantification of pixel-pair connectivity classification
shows near perfect performance. D) Segmentation error as measured
by the number of splits and mergers. \label{fig:Quantification-of-segmentation}}

\end{figure}

We benchmarked the performance of the standard and maximin affinity
classifiers by measuring the the pixel-pair connectivity classification
performance using the Rand index. After training the standard and
MALIS affinity classifiers, we generated affinity graphs for the training
and test images. In principle, the training algorithm suggests a single
threshold for the graph partitioning. In practice, one can generate
a full spectrum of segmentations leading from over-segmentations to
under-segmentations by varying the threshold parameter. In Fig. \ref{fig:Quantification-of-segmentation},
we plot the Rand index for segmentations resulting from a range of
threshold values.

In images with large numbers of segments, most pixel pairs will be
disconnected from one another leading to a large imbalancing the number
of connected and disconnected pixel pairs. This is reflected in the
fact that the Rand index is over 95\% for both segmentation algorithms.
While this imbalance between positive and negative examples is not
a significant problem for training the affinity classifier, it can
make comparisons between classifiers difficult to interpret. Instead,
we can use the ROC and precision-recall methodologies, which provide
for accurate quantification of the accuracy of classifiers even in
the presence of large class imbalance. From these curves, we observe
that our maximin affinity classifier dramatically outperforms the
standard affinity classifier.

Our positive results have an intriguing interpretation. The poor performance
of the connected components when applied to a standard learned affinity
classifier could be interpreted to imply that 1) a local classifier
lacks the context important for good affinity prediction; 2) connected
components is a poor strategy for image segmentation since mistakes
in the affinity prediction of just a few edges can merge or split
segments. On the contrary, our experiments suggest that when trained
properly, thresholded affinity classification followed by connected
components can be an extremely competitive method of image segmentations.

\begin{figure}
\begin{centering}
\includegraphics{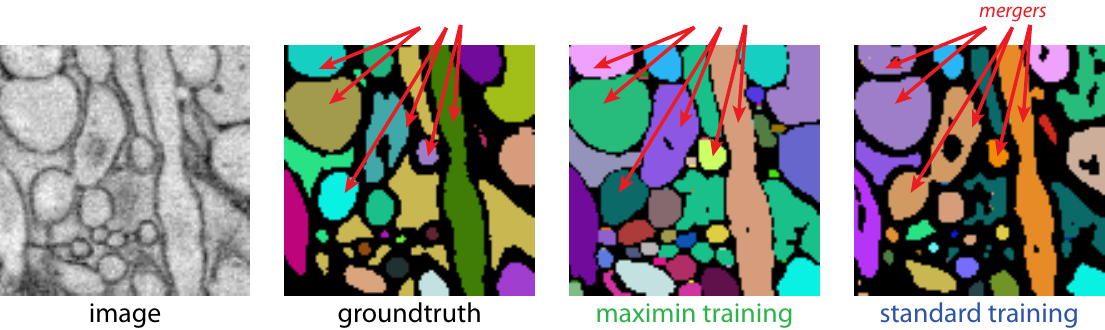}
\par\end{centering}

\caption{\textbf{A 2d cross-section through a 3d segmentation of the test image.}
The maximin segmentation correctly segments several objects which
are merged in the standard segmentation, and even correctly segments
objects which are missing in the groundtruth segmentation. Not all
segments merged in the standard segmentation are merged at locations
visible in this cross section. Pixels colored black in the machine
segmentations correspond to pixels completely disconnected from their
neighbors and represent boundary regions.}

\end{figure}

\section{Discussion\label{sec:Discussion}}

In this paper, we have trained an affinity classifier to produce affinity
graphs that result in excellent segmentations when partitioned by
the simple graph partitioning algorithm of thresholding followed by
connected components. The key to good performance is the training
of a segmentation-based cost function, and the use of a powerful trainable
classifier to predict affinity graphs. Once trained, our segmentation
algorithm is fast. In contrast to classic graph-based segmentation
algorithms where the partitioning phase dominates, our partitioning
algorithm is simple and can partition graphs in time linearly proportional
to the number of edges in the graph. We also do not require any prior
knowledge of the number of image segments or image segment sizes at
test time, in contrast to other graph partitioning algorithms \cite{Felzenszwalb:2004,Shi:2000}.

The formalism of maximin affinities used to derive our learning algorithm
has connections to single-linkage hierarchical clustering, minimum
spanning trees and ultrametric distances. Felzenszwalb and Huttenlocher
\cite{Felzenszwalb:2004} describe a graph partitioning algorithm
based on a minimum spanning tree computation which resembles our segmentation
algorithm, in part. The Ultrametric Contour Map algorithm \cite{Arbelaez:2006}
generates hierarchical segmentations nearly identical those generated
by varying the threshold of our graph partitioning algorithm. Neither
of these methods incorporates a means for learning from labeled data,
but our work shows how the performance of these algorithms can be
improved by use of our maximin affinity learning.

\subsection*{Acknowledgements}

SCT and HSS were supported in part by the Howard Hughes Medical Institute
and the Gatsby Charitable Foundation.

\bibliographystyle{ieee}
\bibliography{/Users/sturaga/papers/papers}

\end{document}